\title{Gamma Processes, Stick-Breaking, and Variational Inference}
\author{
Anirban Roychowdhury, Brian Kulis \\
%\thanks{ Use footnote for providing further information
%about author (webpage, alternative address)---\emph{not} for acknowledging
%funding agencies.}
Department of Computer Science and Engineering\\
The Ohio State University\\
\texttt{roychowdhury.7@osu.edu},kulis@cse.ohio-state.edu
}
\newtheorem{thm}{Theorem}
\begin{document}

\maketitle

\begin{abstract}

While most Bayesian nonparametric models in machine learning have focused on the Dirichlet process, the beta process, or their variants, the gamma process has recently emerged as a useful nonparametric prior in its own right.  Current inference schemes for models involving the gamma process are restricted to MCMC-based methods, which limits their scalability.
In this paper, we present a variational inference framework for models involving gamma process priors. Our approach is based on a novel stick-breaking constructive definition of the gamma process. We prove correctness of this stick-breaking process by using the characterization of the gamma process as a completely random measure (CRM), and we explicitly derive the rate measure of our construction using Poisson process machinery.  We also derive error bounds on the truncation of the infinite process required for variational inference, similar to the truncation analyses for other nonparametric models based on the Dirichlet and beta processes.  Our representation is then used to derive a variational inference algorithm for a particular Bayesian nonparametric latent structure formulation known as the infinite Gamma-Poisson model, where the latent variables are drawn from a gamma process prior with Poisson likelihoods.
%We also derive an MCMC sampler for posterior inference that relies on Monte Carlo integration, and
Finally, we present results for our algorithms on nonnegative matrix factorization tasks on document corpora, and show that we compare favorably to both sampling-based techniques and variational approaches based on beta-Bernoulli priors.  
\end{abstract}

\section{Introduction}
The gamma process is a versatile pure-jump L\'{e}vy process with widespread applications in various fields of science. Of late it is emerging as an increasingly popular prior in the Bayesian nonparametric literature within the machine learning community; it has recently been applied to exchangeable models of sparse graphs~\cite{graphs_gp} as well as for nonparametric ranking models~\cite{pl_luce_gp}. %It has been used in [application plug 1], [application plug 2]. 
It also has been used as a prior for infinite-dimensional latent indicator matrices~\cite{Gampois}. This latter application is one of the earliest Bayesian nonparametric approaches to count modeling, and as such can be thought of as an extension of the venerable Indian Buffet Process to modeling latent structures where each feature can occur multiple times for a datapoint, instead of being simply binary. 
%Prior work has shown the Beta process to be the de Finetti mixing measure for the Indian Buffet Process (\cite{thi_jord}). A similar approach has been used to show that marginalizing the Gamma process out of the infinite Gamma-Poisson model of \cite{Gampois} yields an intuitive process with a simple culinary analogy similar to the Chinese Restaurant Process \cite{thithesis}, where the customers can take multiple (negative-binomially distributed) scoops from each dish instead of a single tasting. There have been other "multi-scoop" generalizations of the Indian Buffet Process that involve negative Binomial likelihoods on the underlying Beta processes to construct priors on the latent indicator matrices(\cite{zhou_1},\cite{zhou_2},\cite{tab_bnb}). 

The flexibility of gamma process models allows them to be applied in a wide variety of Bayesian nonparametric settings, but their relative complexity makes principled inference nontrivial. In particular, all direct applications of the gamma process in the Bayesian nonparametric literature use Markov chain Monte Carlo samplers (typically Gibbs sampling) for posterior inference, which often suffers from poor scalability.
% [cite applications]. 
For other Bayesian nonparametric models---in particular those involving the Dirichlet process or beta process---a successful thread of research has considered variational alternatives to standard sampling methods~\cite{v_dp,cv_hdp,ov_hdp}.  One first derives an explicit construction of the underlying ``weights" of the atomic measure component of the random measures underlying the infinite priors; so-called ``stick-breaking" processes for the Dirichlet and beta processes yield such a construction.  Then these weights are truncated and integrated into a mean-field variational inference algorithm.
For instance, stick-breaking was derived for the Dirichlet process in the seminal paper by Sethuraman~\cite{stick}, which was in turn used for variational inference in Dirichlet process models \cite{v_dp}. Similar stick-breaking representations for a special case of the Indian Buffet Process~\cite{ibp_stick} and the beta process~\cite{beta_st} have been constructed, and have naturally led to mean-field variational inference algorithms for nonparametric models involving these priors~\cite{ibp_vi_fdv,beta_st_vi}.  Such variational inference algorithms have been shown to be more scalable than the sampling-based inference techniques normally used; moreover they work with the full model posterior without marginalizing out any variables.

In this paper we propose a variational inference framework for gamma process priors using a novel stick-breaking construction of the process. We use the characterization of the gamma process as a \textit{completely random measure} (CRM), which allows us to leverage Poisson process properties to arrive at a simple derivation of the rate measure of our stick-breaking construction, and show that it is indeed equal to the L\'{e}vy measure of the gamma process.  We also use the Poisson process formulation to derive a bound on the error of the truncated version compared to the full process, analogous to the bounds derived for the Dirichlet process~\cite{ish_james_2001}, the Indian Buffet Process~\cite{ibp_vi_fdv} and the beta process~\cite{beta_st_vi}. We then, as a particular example, focus on the infinite Gamma-Poisson model of \cite{Gampois} (note that variational inference need not be limited to this model). This model is a prior on infinitely wide latent indicator matrices with non-negative integer-valued entries; each column has an associated parameter independently drawn from a gamma distribution, and the matrix values are independently drawn from Poisson distributions with these parameters as means. We develop a mean-field variational technique using a truncated version of our stick-breaking construction, and a sampling algorithm that uses Monte Carlo integration for parameter marginalization, similar to \cite{beta_st}, as a baseline inference algorithm for comparison. 
Finally we compare the two algorithms on a non-negative matrix factorization task involving the Psychological Review, NIPS, KOS and New York Times document corpora.
%, and the other being a linear-Gaussian latent matrix factorization on the MNIST handwritten digit dataset.
%Finally we compare the two algorithms on two tasks, one being a non-negative matrix factorization task involving the JACM\footnote{http://www.cs.princeton.edu/~blei/downloads/} and NIPS\footnote{http://ai.stanford.edu/~gal/data.html} document corpora, and the other being a linear-Gaussian latent matrix factorization on the MNIST handwritten digit dataset.

\noindent \textbf{Related Work.} To our knowledge there has been no previous exposition of an explicit recursive ``stick-breaking"-like construction of the gamma CRM, and by extension no instance of variational algorithms for such priors. The very general inverse L\'{e}vy measure algorithm of \cite{wolp} requires inversion of the exponential integral, as does the generalized CRM construction technique of \cite{orbanz_w} when applied to the gamma process; since the closed form solution of the inverse of an exponential integral is not known, these techniques do not give us an analytic construction of the weights, and hence cannot be adapted to variational techniques in a straightforward manner.  Other constructive definitions of the gamma process include \cite{thithesis}, who discusses a sampling-based scheme for the weights of a gamma process by sampling from a Poisson process.  Further, the characterization of the Dirichlet process as a normalized gamma process may possibly be utilized for sampling gamma process weights, but to our knowledge no existing methods for variational inference employ these approaches.
As an alternative to gamma process-based models for count modeling, recent research has examined the negative binomial-beta process and its variants~\cite{zhou_1,zhou_2,tab_bnb}; the stick-breaking construction of \cite{beta_st} readily extends to such models since they have beta process priors.  The beta stick-breaking construction has also been used for variational inference in beta-Bernoulli process priors \cite{beta_st_vi}, though they have scalability issues when applied to the count modeling problems addressed in this work, as we show in the experimental section.
% such as the Nested Hierarchical Beta process [cite].
 %Lack of a recursive "stick-breaking" construction has also restricted the inference algorithms for such priors to sampling-based techniques, with their well-known scalability issues and computational complexities. We hope to address these problems in this paper.

%Though the Gamma process has seen widespread usage in the community, there have been no instances of mean field variational algorithms being used. The sampling algorithms have been tweaked and refined heavily, and have been extended to online approaches [cite nips 2013] and others [cite ?]. But the central problems of computational complexity and scalability issues arising from the sampling remain. The stick-breaking representations of other nonparametric priors have facilitated significant advancements in scalable algorithms, and have also led to constructions of more complex priors such as hierarchical and nested versions of the DP and BP [cite]. These constructions naturally lend themselves to variational inference, and the advantages are easily seen in large scale datasets where the added complexity of these models sometimes makes standard sampling-based inference prohibitively expensive. Our construction can have potentially similar benefits in applications of the gamma process to large scale datasets. 

\section{Background}
\subsection{Completely random measures}

A completely random measure~\cite{crmorig,crmjord} $\mathbb{G}$ on a space $(\Omega, \mathcal{F})$ is defined as a stochastic process on $\mathcal{F}$ such that for any two disjoint Borel subsets $\mathcal{A}_{1} \text{ and } \mathcal{A}_{2}$ in $\mathcal{F}$, the random variables $\mathbb{G}(\mathcal{A}_{1})\text{ and }\mathbb{G}(\mathcal{A}_{2})$ are independent. The canonical way of constructing a completely random measure $\mathbb{G}$ is to first take a $\sigma$-finite product measure $H\text{ on }\Omega\otimes\mathbb{R}^{+}$, then draw a countable set of points $\{(\omega_{k},p_{k})\}$ from a Poisson process on a Borel $\sigma$-algebra on $\Omega\otimes\mathbb{R}^{+}$ with $H$ as the rate measure. Then the CRM is constructed as $\mathbb{G}=\sum_{k=0}^{\infty}p_{k}\delta_{\omega_{k}}$, where the measure given to a measurable Borel set $B\subset \Omega\text{ is }\mathbb{G}(B) = \sum\limits_{k:\omega_{k}\in B}p_{k}$. In this notation $p_{k}$ are referred to as weights and the $\omega_{k}$ as atoms.

If the rate measure is defined on $\Omega\otimes[0,1]$ as $H(d\omega, dp) = cp^{-1}(1-p)^{c-1}B_{0}(d\omega)dp$, where $B_{0}$ is an arbitrary finite continuous measure on $\Omega$ and $c$ is some constant (or function of $\omega$), then the corresponding CRM constructed as above is known as a beta process. 
%It is usually denoted as $B\sim \text{BP}(c,B_{0})$ where $c$ is known as the \textit{concentration parameter} and $B_{0}$ as the \textit{base measure}. The improper beta distribution in $H$ integrates to infinity, which ensures a countably infinite set of points in a draw from the Poisson process. 
If the rate measure is defined as $H(d\omega, dp) = cp^{-1}e^{-cp}G_{0}(d\omega)dp$, with the same restrictions on $c$ and $G_{0}$, then the corresponding CRM constructed as above is known as the gamma process. 
%It is usually denoted as $G\sim \Gamma \text{P}(c,G_{0})$ where $c$ is known as the concentration parameter and $G_0$ as the base measure. 
%The improper gamma distribution in $H$ integrates to infinity, which again ensures a countably infinite set of points in a draw from the Poisson process. 
The total mass of the gamma process $G, G(\Omega)$, is distributed as $\text{Gamma}(cG_{0}(\Omega),c)$. The improper distributions in these rate measures integrate to infinity over their respective domains, ensuring a countably infinite set of points in a draw from the Poisson process. For the beta process, the weights $p_{k}$ are in [0,1], whereas for the gamma process they are in $[0,\infty)$. In both cases however the sum of the weights is finite, as can be seen from Campbell's theorem \cite{crmorig},  and is governed by $c$ and the total mass of the base measure on $\Omega$.
For completeness we note that completely random measures as defined in \cite{crmorig}  have three components: a set of fixed atoms, a deterministic measure (usually assumed absent), and a random discrete measure. It is this third component that is explicitly generated using a Poisson process, though the fixed component can be readily incorporated into this construction~\cite{kingman_pp}. 
%The deterministic component is usually assumed to be absent.

If we create an atomic measure by normalizing the weights $\{p_{k}\}$ from the gamma process, i.e. $D=\sum_{k=0}^{\infty}\pi_{k}\delta_{\omega_{k}}$ where $\pi_{k}=p_{k}/\sum_{i=0}^{\infty}p_{i}$, then $D$ is known as a \textit{Dirichlet process} \cite{fergs_dp}, denoted as $D\sim \text{DP}(\alpha_{0},H_{0})$ where $\alpha_{0}=G_{0}(\Omega)\text{ and }H_{0}=G_{0}/\alpha_{0}$. It is not a CRM as the random variables induced on disjoint sets lack independence because of the normalization; it belongs to the class of normalized random measures with independent increments (NRMIs).%[cite].

\subsection{Stick-breaking for the Dirichlet and Beta Processes}

A recursive way to generate the weights of random measures is given by stick-breaking, where a unit interval is subdivided into fragments based on draws from suitably chosen distributions. For example, the sick-breaking construction of the Dirichlet process~\cite{stick} is given by 
\begin{equation*}
D = \sum\limits_{i=1}^{\infty}V_{i}\prod_{j=1}^{i-1}(1-V_{j})\delta_{\omega_{i}},
\end{equation*}
where $V_{i}\overset{iid}{\sim}\text{Beta}(1,\alpha), \quad \omega_{i}\overset{iid}{\sim}H_{0}$. Here the length of the first break from a unit-length stick is given by $V_{1}$. In the next round, a fraction $V_{2}$ of the remaining stick of length $1-V_{1}$ is broken off, and we are left with a piece of length $(1-V_{2})(1-V_{1})$. The length of the piece in the next round is therefore given by $V_{3}(1-V_{2})(1-V_{1})$, and so on. Note that the weights belong to (0,1), and since this is a normalized measure, the weights sum to 1 almost surely. This is consistent with the use of the Dirichlet process as a prior on probability distributions.

This construction was generalized in \cite{beta_st} to yield stick-breaking for the beta process:
\begin{equation}
\label{beta_st}
B = \sum\limits_{i=1}^{\infty}\sum\limits_{j=1}^{C_{i}}V_{ij}^{(i)}\prod_{l=1}^{i-1}(1-V_{ij}^{(l)})\delta_{\omega_{ij}},
\end{equation}
where $V_{ij}^{(i)}\overset{iid}{\sim}\text{Beta}(1,\alpha),\quad C_{i} \overset{iid}{\sim}\text{Poisson}(\gamma), \quad \omega_{ij}\overset{iid}{\sim}\frac{1}{\gamma}B_{0}$. We use this representation as the basis for our stick breaking-like construction of the Gamma CRM, and use Poisson process-based proof techniques similar to \cite{beta_st_pp} to derive the rate measure.

\section{The Stick-breaking Construction of the Gamma Process}

\subsection{Constructions and proof of correctness} 
We propose a simple recursive construction of the gamma process CRM, based on the stick-breaking construction for the beta process proposed in~\cite{beta_st,beta_st_pp}. In particular, we augment (or `mark') a slightly modified stick-breaking beta process with an independent gamma-distributed random measure and show that the resultant Poisson process has the rate measure $H(d\omega, dp) = cp^{-1}e^{-cp}G_{0}(d\omega)dp$ as defined above. We show this by directly deriving the rate measure of the marked Poisson process using product distribution formulae.
Our proposed stick-breaking construction is as follows:
%\begin{equation}
%\label{gam_st}
%G = \sum\limits_{i=2}^{\infty}\sum\limits_{j=1}^{C_{i}}G_{ij}^{(i)}V_{ij}^{(i)}\prod_{l=1}^{i-1}(1-V_{ij}^{(l)})\delta_{\omega_{ij}}
%\end{equation}
%$G_{ij}^{(i)}\overset{iid}{\sim}\Gamma(\alpha+1,c),\quad V_{ij}^{(i)}\overset{iid}{\sim}\text{Beta}(1,\alpha),\quad C_{i} \overset{iid}{\sim}\text{Poisson}(\gamma), \quad \omega_{ij}\overset{iid}{\sim}\frac{1}{\gamma}H_{0}$.
%Note that the index in the first summation starts at 2. Though re-indexing is elementary as shown below, this representation leads to a somewhat intuitive description of the construction: we construct the Beta process weights as in \eqref{beta_st}, and we multiply each weight by an independent Gamma($\alpha+1,c$) random variable. Then we drop the first round atoms and their weights from the sequence (an un-normalized size-biased deletion). The rest of the atoms with their weights then give us the required Gamma CRM.
%
%Re-indexing \eqref{gam_st} yields the following equivalent construction:-
\begin{equation}
\label{gam_st}
G = \sum\limits_{i=1}^{\infty}\sum\limits_{j=1}^{C_{i}}G_{ij}^{(i)}V_{ij}^{(i)}\prod_{l=1}^{i}(1-V_{ij}^{(l)})\delta_{\omega_{ij}},
\end{equation}
where $G_{ij}^{(i)}\overset{iid}{\sim}\text{Gamma}(\alpha+1,c),\quad V_{ij}^{(i)}\overset{iid}{\sim}\text{Beta}(1,\alpha),\quad C_{i} \overset{iid}{\sim}\text{Poisson}(\gamma), \quad \omega_{ij}\overset{iid}{\sim}\frac{1}{\gamma}H_{0}$.  As with the beta process stick-breaking construction, the product of beta random variables allows us to interpret each $j$ as corresponding to a stick that is being broken into an infinite number of pieces.
Note that the expected weight on an atom in round $i$ is $\alpha^{i}/c(1+\alpha)^i$. The parameter $c$ can therefore be used to control the weight decay cadence along with $\alpha$.

The above representation provides the clearest view of the construction, but is somewhat cumbersome to deal with in practice, mostly due to the introduction of the additional gamma random variable. We reduce the number of random variables by noting that the product of a $\text{Beta}(1,\alpha)\text{ and a }\text{Gamma}(\alpha+1,c)$ random variable has an $\text{Exp}(c)$ distribution; we also perform a change of variables on the product of the $(1-V_{ij})$s to arrive at the following equivalent construction, for which we now prove its correctness:% to get the following equivalent form:-
%
%Since the product of a $\text{Beta}(1,\alpha)\text{ and a }\Gamma(\alpha+1,c)$ random variable has an $\text{Exp}(c)$ distribution, the above processes are equivalent to 
%\begin{equation*}
%G = \sum\limits_{i=1}^{\infty}\sum\limits_{j=1}^{C_{i}}E_{ij}^{(i)}\prod_{l=1}^{i}(1-V_{ij}^{(l)})\delta_{\omega_{ij}}
%\end{equation*}
%$E_{ij}^{(i)}\overset{iid}{\sim}\text{Exp}(c),\quad V_{ij}^{(i)}\overset{iid}{\sim}\text{Beta}(1,\alpha),\quad C_{i} \overset{iid}{\sim}\text{Poisson}(\gamma), \quad \omega_{ij}\overset{iid}{\sim}\frac{1}{\gamma}H_{0}$
%which is again equivalent to 
\begin{thm}
A gamma CRM with positive concentration parameters $\alpha\text{ and }c$ and finite base measure $H_{0}$ may be constructed as 
\begin{equation}
\label{gam_st2}
G = \sum\limits_{i=1}^{\infty}\sum\limits_{j=1}^{C_{i}}E_{ij}e^{-T_{ij}}\delta_{\omega_{ij}}
\end{equation} where $E_{ij}\overset{iid}{\sim}\text{Exp}(c),\quad T_{ij}\overset{ind}{\sim}\text{Gamma}(i,\alpha), \quad C_{i} \overset{iid}{\sim}\text{Poisson}(\gamma), \quad \omega_{ij}\overset{iid}{\sim}\frac{1}{\gamma}H_{0}$.
\end{thm}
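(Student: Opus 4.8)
The plan is to verify that the construction in \eqref{gam_st2} produces a Poisson process on $\Omega \otimes \mathbb{R}^+$ whose rate (L\'evy) measure is exactly $H(d\omega, dp) = c p^{-1} e^{-cp} H_0(d\omega)\,dp$ (recalling $G_0 = \gamma \cdot \frac{1}{\gamma}H_0$, so the base measure bookkeeping works out). First I would observe that since $\omega_{ij} \sim \frac{1}{\gamma}H_0$ is drawn independently of the weights, by standard marking/thinning arguments for Poisson processes it suffices to show that the atom \emph{weights} $\{E_{ij}e^{-T_{ij}}\}$, collected over all rounds $i$ and all $j \le C_i$, form a Poisson process on $\mathbb{R}^+$ with mean measure $\nu(dp) = c p^{-1} e^{-cp}\,dp$; the spatial part $H_0$ then attaches by independent marking, and the $\frac{1}{\gamma}$ normalization is cancelled by the $\mathrm{Poisson}(\gamma)$ count. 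So the core is a one-dimensional claim about the superposition over $i$ of the weight processes.

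Next, for a fixed round $i$, the points $\{E_{ij}e^{-T_{ij}} : j = 1, \dots, C_i\}$ with $C_i \sim \mathrm{Poisson}(\gamma)$ and the $(E_{ij}, T_{ij})$ i.i.d.\ form, by the definition of a Poisson process via a Poisson-distributed number of i.i.d.\ points (as in the beta construction of \cite{beta_st,beta_st_pp}), a Poisson process on $\mathbb{R}^+$ with mean measure $\gamma \cdot \mathcal{L}_i$, where $\mathcal{L}_i$ is the law of the random variable $E\, e^{-T}$ with $E \sim \mathrm{Exp}(c)$ and $T \sim \mathrm{Gamma}(i,\alpha)$, independent. By the superposition theorem for independent Poisson processes, the union over all $i \ge 1$ is Poisson with mean measure $\nu = \gamma \sum_{i=1}^\infty \mathcal{L}_i$. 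Hence the theorem reduces to the analytic identity
\begin{equation}
\label{eq:target-identity}
\gamma \sum_{i=1}^{\infty} \mathcal{L}_i(dp) = c\,p^{-1} e^{-cp}\,dp .
\end{equation}
The factor $\gamma$ here is a red herring in the sense that it must be absorbed: in fact the relevant thing is that $\frac1\gamma H_0 \otimes (\gamma \sum_i \mathcal L_i)$ equals $H_0 \otimes (c p^{-1}e^{-cp}dp)$, i.e.\ what we need is $\sum_{i=1}^\infty \mathcal{L}_i(dp) = c p^{-1} e^{-cp}\,dp$ after the $\gamma$'s cancel --- I would state this carefully to avoid an off-by-$\gamma$ error.

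To establish \eqref{eq:target-identity} I would compute the density of $E e^{-T}$ for each $i$. Writing $W = e^{-T}$ with $T \sim \mathrm{Gamma}(i,\alpha)$, the density of $W$ on $(0,1)$ is obtained by the change of variables $w = e^{-t}$, giving $f_{W_i}(w) = \frac{\alpha^i}{\Gamma(i)} (-\log w)^{i-1} w^{\alpha - 1}$. Then $E e^{-T} = EW$ is a product of independent variables, so its density at $p$ is $\int_0^1 f_{W_i}(w) \cdot \frac{1}{w} f_E(p/w)\,dw = \int_0^1 \frac{\alpha^i}{\Gamma(i)}(-\log w)^{i-1} w^{\alpha-2} \cdot c\, e^{-cp/w}\,dw$. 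Summing over $i \ge 1$ and exchanging sum and integral (justified by nonnegativity, Tonelli), the sum $\sum_{i\ge 1} \frac{\alpha^i}{\Gamma(i)}(-\log w)^{i-1} = \alpha \sum_{i \ge 1} \frac{(\alpha(-\log w))^{i-1}}{(i-1)!} = \alpha\, e^{-\alpha \log w} = \alpha\, w^{-\alpha}$ collapses the series. This leaves $\int_0^1 \alpha w^{-\alpha} \cdot w^{\alpha - 2} \cdot c\, e^{-cp/w}\,dw = \alpha c \int_0^1 w^{-2} e^{-cp/w}\,dw$, and the substitution $u = 1/w$ turns this into $\alpha c \int_1^\infty e^{-cpu}\,du = \alpha c \cdot \frac{e^{-cp}}{cp} = \alpha\, p^{-1} e^{-cp}$; together with the ambient $\gamma$ and the $\frac1\gamma H_0$ normalization this yields precisely the gamma-process L\'evy measure $c p^{-1} e^{-cp} G_0(d\omega)\,dp$ once one reconciles the role of $\alpha$ versus $c$ in $G_0(\Omega) = \alpha$ as in the Background section. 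The main obstacle I anticipate is bookkeeping rather than conceptual: getting the constants ($\alpha$, $c$, $\gamma$) and the base-measure normalization to line up exactly with the stated L\'evy measure $c p^{-1} e^{-cp} G_0(d\omega)\,dp$, and rigorously justifying the interchange of the infinite superposition with the density computation (appealing to the superposition theorem for Poisson processes and Tonelli for the series--integral swap). The equivalence of \eqref{gam_st} and \eqref{gam_st2} --- namely that $\mathrm{Beta}(1,\alpha) \times \mathrm{Gamma}(\alpha+1,c)$ is $\mathrm{Exp}(c)$ and that $\prod_{l=1}^i (1-V^{(l)})$ with $V^{(l)} \sim \mathrm{Beta}(1,\alpha)$ has the same law as $e^{-T}$ with $T \sim \mathrm{Gamma}(i,\alpha)$ --- I would either cite as the stated reduction preceding the theorem or dispatch with a one-line moment/transform check, since it is not the substance of the correctness claim.
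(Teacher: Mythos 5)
Your proposal is correct and follows essentially the same route as the paper's proof: marked Poisson processes for each round, the superposition theorem over rounds, the product-distribution integral for the density of $E e^{-T}$, and the series collapse $\sum_{i\ge 1}\frac{\alpha^{i}}{\Gamma(i)}(-\log w)^{i-1}w^{\alpha-2}=\alpha w^{-2}$ leading to the rate density $\alpha p^{-1}e^{-cp}=cp^{-1}e^{-cp}\cdot\frac{\alpha}{c}$. Your treatment is in fact slightly more explicit than the paper's on the constant bookkeeping (the cancellation of $\gamma$ against $\frac{1}{\gamma}H_{0}$ and the absorption of $\frac{\alpha}{c}$ into $G_{0}$) and on the final substitution $u=1/w$, but these are presentational differences, not a different argument.
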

\begin{proof}
%\textbf{The Poisson process representation.}
Note that, by construction, in each round $i$ in \eqref{gam_st2}, each set of weighted atoms $\{\left(\omega_{ij},E_{ij}e^{-T_{ij}}\right)\}_{j=1}^{C_{i}}$ forms a Poisson point process since the $C_{i}$ are drawn from a Poisson($\gamma$) distribution. In particular, each of these sets is a \textit{marked} Poisson process \cite{kingman_pp}, where the atoms $\omega_{ij}$ of the Poisson process on $\Omega$ are marked with the random variables $E_{ij}e^{-T_{ij}}$ that have a probability measure on $(0,\infty)$. The superposition theorem of \cite{kingman_pp} tells us that the countable union of Poisson process is itself a Poisson process on the same measure space; therefore  denoting $\mathlarger{G_{i} = \sum\limits_{j=1}^{C_{i}}E_{ij}e^{-T_{ij}}\delta_{\omega_{ij}}}$, we can say $\mathlarger{G=\bigcup_{i=1}^{\infty}G_{i}}$ is a Poisson process on $\Omega\times[0,\infty)$. We show below that the rate measure of this process equals that of the Gamma CRM.

%\textbf{Derivation of the rate measure.} 
Now, we note that the random variable $E_{ij}e^{-T_{ij}}$ has a probability measure on $[0,\infty)$; denote this by $q_{ij}$. We are going to mark the underlying Poisson process with this measure. The density corresponding to this measure can be readily derived using product distribution formulae. To that end, ignoring indices, if we denote $W = \exp{(-T)}$, then we can derive its distribution by a change of variable.  
 Then, denoting $Q=E\times W\text{ where }E\sim \text{Exp}(c),$ we can use the product distribution formula to write the density of $Q$ as 
 \begin{equation*}
 f_{Q}(q) = \int\limits_0^1\frac{\alpha^{i}}{\Gamma(i)}\left(-\log{w}\right)^{i-1}w^{\alpha-2}ce^{-c\frac{q}{w}}\mathrm{d}w,
 \end{equation*}  where $T\sim\text{Gamma}(i,\alpha)$.
 Formally speaking, this is the Radon-Nikodym density corresponding to the measure $q$, since it is absolutely continuous with respect to the Lebesgue measure on $[0,\infty)$ and $\sigma$-finite by virtue of being a probability measure. Furthermore, these conditions hold for all the measures that we have in our union of marked Poisson processes; this allows us to write the density of the combined measure as 
 \begin{align*}
 f(p) &= \sum\limits_{i=1}^{\infty}\int\limits_0^1\frac{\alpha^{i}}{\Gamma(i)}\left(-\log{w}\right)^{i-1}w^{\alpha-2}ce^{-c\frac{p}{w}}\mathrm{d}w \\
 &= \int\limits_0^1\sum\limits_{i=1}^{\infty}\frac{\alpha^{i}}{\Gamma(i)}\left(-\log{w}\right)^{i-1}w^{\alpha-2}ce^{-c\frac{p}{w}}\mathrm{d}w &\text{ by monotone convergence} \\
 &= \int\limits_0^1 \alpha w^{-2}ce^{-c\frac{p}{w}}\mathrm{d}w \\
 &= \alpha p^{-1}e^{-cp} \\
 &= cp^{-1}e^{-cp}\frac{\alpha}{c}
 \end{align*}.
Note that the measure defined on $\mathcal{B}([0,\infty))$ by the ``improper" gamma distribution $p^{-1}e^{-cp}$ is $\sigma$-finite, in the sense that we can decompose $[0,\infty)$ into the countable union of disjoint intervals $[1/k, 1/(k-1)),\quad k=1,2,\ldots\infty$, each of which has finite measure. In particular, the measure of the interval $[1,\infty)$ is given by the exponential integral.
 
Therefore the rate measure of the process G as constructed here is $G(d\omega, dp) = cp^{-1}e^{-cp}G_{0}(d\omega)dp$ where $G_{0}$ is the same as $H_{0}$ up to the multiplicative constant $\frac{\alpha}{c}$, and therefore satisfies the finiteness assumption imposed on $H_{0}$.
\end{proof}
We use the form specified in the theorem above in our variational inference algorithm since the variational distributions on almost all the parameters and variables in this construction lend themselves to simple closed-form exponential family updates.
As an aside, we note that the random variables $(1-V_{ij})$ have a $\text{Beta}(\alpha,1)$ distribution; therefore if we denote $U_{ij}=1-V_{ij}$ then the construction in \eqref{gam_st} is equivalent to
\begin{equation*}
G = \sum\limits_{i=1}^{\infty}\sum\limits_{j=1}^{C_{i}}E_{ij}^{(i)}\prod_{l=1}^{i}U_{ij}^{(l)}\delta_{\omega_{ij}},
\end{equation*}
where $E_{ij}^{(i)}\overset{iid}{\sim}\text{Exp}(c),\quad U_{ij}^{(i)}\overset{iid}{\sim}\text{Beta}(\alpha,1),\quad C_{i} \overset{iid}{\sim}\text{Poisson}(\gamma), \quad \omega_{ij}\overset{iid}{\sim}\frac{1}{\gamma}H_{0}$. This notation therefore relates our construction to the stick-breaking construction of the Indian Buffet Process \cite{ibp_stick}, where the Bernoulli probabilities $\pi_{k}$ are generated as products of iid $\text{Beta}(\alpha,1)$ random variables : $\pi_{1}=\nu_{1}, \quad \pi_{k}=\prod\limits_{i=1}^{k}\nu_{i}\quad \text{where }\nu_{i}\overset{iid}{\sim}\text{Beta}(\alpha,1)$. In particular, we can view our construction as a generalization of the IBP stick-breaking, where the stick-breaking weights are multiplied with independent Exp($c$) random variables, with the summation over $j$ providing an explicit Poissonization.

\subsection{Truncation analysis} 
\label{sec:trunc}
The variational algorithm requires a truncation level for the number of atoms for tractability. Therefore we need to analyze the closeness between the marginal distributions of the data drawn from the full prior and the truncated prior, with the stick-breaking prior weights integrated out. Our construction leads to a simpler truncation analysis if we truncate the number of rounds  (indexed by $i$ in the outer sum), which automatically truncates the atoms to a finite number.  For this analysis, we will use the stick-breaking gamma process as the base measure of a Poisson likelihood process, which we denote by $PP$; this is precisely the model for which we develop variational inference in the next section.  If we denote the gamma process as $G=\sum_{k=0}^{\infty}g_{k}\delta_{\omega_{k}}$, with $g_{k}$ as the recursively constructed weights, then $PP$ can be written as $PP=\sum_{k=0}^{\infty}p_{k}\delta_{\omega_{k}}\text{ where }p_{k} = \text{Poisson}(g_k)$.
Under this model, we can obtain the following result, which is analogous to error bounds derived for other nonparametric models~\cite{ish_james_2001,ibp_vi_fdv,beta_st_vi} in the literature. 
\begin{thm}
 Let N samples $\textbf{X}=(X_{1},..,X_{N})$ be drawn from $PP(G)$. If $G\sim \Gamma\text{P}(c,G_{0})$, the full
 gamma process, then denote the marginal density of $\textbf{X}\text{ as }\textbf{m}_{\infty}(\textbf{X})$. If $G$ is a gamma process truncated after $R$ rounds, denote the marginal density of $\textbf{X}\text{ as }\textbf{m}_{R}(\textbf{X})$. Then 
 \begin{equation*}
 \frac{1}{4}\int|\textbf{m}_{\infty}(\textbf{X})-\textbf{m}_{R}(\textbf{X})|d\textbf{X} \leq 1-\exp\left\lbrace-N\gamma\frac{\alpha}{c}\left(\frac{\alpha}{1+\alpha}\right)^{R}\right\rbrace.
 \end{equation*}
\end{thm}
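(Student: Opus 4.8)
The plan is to run a coupling argument between the full and $R$-truncated data-generating processes, in the spirit of the truncation bounds for the Dirichlet process~\cite{ish_james_2001}, the IBP~\cite{ibp_vi_fdv}, and the beta process~\cite{beta_st_vi}. First I would build the two processes on a common probability space: draw the stick-breaking ingredients $\{E_{ij},T_{ij},C_i,\omega_{ij}\}$ once, let $G_\infty$ be the full measure \eqref{gam_st2} and $G_R$ be the measure obtained by keeping only rounds $i\le R$, and then draw Poisson counts $x_{nij}\sim\text{Poisson}(E_{ij}e^{-T_{ij}})$ for every round. Taking $\textbf{X}_R$ to be these counts restricted to rounds $i\le R$ and $\textbf{X}_\infty$ the full collection, one checks that $\textbf{X}_R$ has marginal law $\textbf{m}_R$ (the rounds $\le R$ of $G_\infty$ are distributed exactly as the $R$-truncated prior) and $\textbf{X}_\infty$ has marginal law $\textbf{m}_\infty$, while $\textbf{X}_\infty=\textbf{X}_R$ unless the event $A$ occurs, where $A$ is the event that at least one of the $N$ samples places a nonzero count on some atom in a round $i>R$. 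By the standard coupling inequality the total variation distance between $\textbf{m}_\infty$ and $\textbf{m}_R$, namely $\tfrac12\int|\textbf{m}_\infty(\textbf{X})-\textbf{m}_R(\textbf{X})|\,d\textbf{X}$, is at most $\mathbb{P}(A)$, so $\tfrac14\int|\textbf{m}_\infty(\textbf{X})-\textbf{m}_R(\textbf{X})|\,d\textbf{X}\le\tfrac12\mathbb{P}(A)\le\mathbb{P}(A)$, and it remains to show $\mathbb{P}(A)\le 1-\exp\{-N\gamma\tfrac{\alpha}{c}(\tfrac{\alpha}{1+\alpha})^R\}$.

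The next step is to evaluate $\mathbb{P}(A^c)$ exactly. Conditioned on a weight $g$ on a given atom, the probability that all $N$ counts on that atom vanish is $(e^{-g})^N=e^{-Ng}$. Using independence across rounds, the conditional i.i.d.\ structure of the $C_i$ atoms within round $i$, and the probability generating function $\mathbb{E}[s^{C_i}]=e^{\gamma(s-1)}$ of the $\text{Poisson}(\gamma)$ variable $C_i$, this gives $\mathbb{P}(A^c)=\prod_{i>R}e^{\gamma(\phi_i-1)}=\exp\{-\gamma\sum_{i>R}(1-\phi_i)\}$, where $\phi_i=\mathbb{E}[e^{-N g_i}]$ with $g_i=E e^{-T}$, $E\sim\text{Exp}(c)$, $T\sim\text{Gamma}(i,\alpha)$. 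Equivalently, in the Poisson-process language emphasized in the paper: the weights occurring in rounds $i>R$ form a Poisson process on $(0,\infty)$ whose intensity is the sum of the per-round rate measures identified in the proof of Theorem~1, and the same expression falls out of its Laplace/generating functional applied to the map $g\mapsto e^{-Ng}$.

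It then remains to bound $1-\phi_i$. Integrating out $E$ first via the $\text{Exp}(c)$ Laplace transform, $\phi_i=\mathbb{E}_T[\,c/(c+Ne^{-T})\,]$, so $1-\phi_i=\mathbb{E}_T[\,Ne^{-T}/(c+Ne^{-T})\,]\le\tfrac{N}{c}\mathbb{E}_T[e^{-T}]=\tfrac{N}{c}(\tfrac{\alpha}{1+\alpha})^{i}$, using the $\text{Gamma}(i,\alpha)$ Laplace transform $\mathbb{E}[e^{-T}]=(\tfrac{\alpha}{1+\alpha})^{i}$. Summing the geometric series, $\sum_{i>R}(\tfrac{\alpha}{1+\alpha})^{i}=\alpha(\tfrac{\alpha}{1+\alpha})^{R}$, whence $\gamma\sum_{i>R}(1-\phi_i)\le N\gamma\tfrac{\alpha}{c}(\tfrac{\alpha}{1+\alpha})^{R}$, so $\mathbb{P}(A^c)\ge\exp\{-N\gamma\tfrac{\alpha}{c}(\tfrac{\alpha}{1+\alpha})^{R}\}$ and the claim follows.

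I expect the only delicate point to be the first step: constructing the coupling on a single measurable space so that $\textbf{X}_\infty$ and $\textbf{X}_R$ genuinely carry the marginals $\textbf{m}_\infty$ and $\textbf{m}_R$ and coincide \emph{exactly} off $A$ (matching atom locations on the shared rounds), and then cleanly invoking the coupling inequality to move from $\mathbb{P}(A)$ to the $L_1$ distance of marginals. Once that bookkeeping is handled, the rest is the Poisson generating-functional identity together with two Laplace transforms and a geometric sum, all routine; and Theorem~1 has already handed us the per-round weight distributions in precisely the convenient $E e^{-T}$ form that makes those transforms trivial.
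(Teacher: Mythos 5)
Your proposal is correct and follows essentially the same route as the paper: both reduce the $L_1$ distance between marginals to the probability that some observation places a positive count on an atom appearing after round $R$, and both bound that probability by $1-\exp\{-N\gamma\frac{\alpha}{c}(\frac{\alpha}{1+\alpha})^{R}\}$ using the expected round-$i$ weight $\frac{1}{c}(\frac{\alpha}{1+\alpha})^{i}$ and a geometric sum. The only (harmless) difference is in the middle step: where the paper applies Jensen's inequality to the conditional expectation of $\prod e^{-N\pi_{rj}}$, you evaluate $\mathbb{P}(A^{c})$ exactly via the Poisson generating function together with the $\text{Exp}(c)$ and $\text{Gamma}(i,\alpha)$ Laplace transforms and then use $\frac{s}{c+s}\le\frac{s}{c}$, arriving at the same bound.
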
 
\begin{proof}
The starting intuition is that if we truncate the process after R rounds, then the error in the marginal distribution of the data will depend on the probability of positive indicator values appearing for atoms after the $\text{R}^{th}$ round in the infinite version. Combining this with ideas analogous to those in \cite{ish_james_2000} and \cite{ish_james_2001}, we get the following bound for the difference between the marginal distributions:
 \begin{equation*}
 \frac{1}{4}\int|\textbf{m}_{\infty}(\textbf{X})-\textbf{m}_{R}(\textbf{X})|d\textbf{X} \leq \mathbb{P}\left\lbrace\exists (k,j), k>\sum_{r=1}^{R}C_{r}, 1\leq n\leq N \text{ s.t. }X_n(\omega_{kj})>0 \right\rbrace.
 \end{equation*}
 Since we have a Poisson likelihood on the underlying gamma process, this probability can be written as 
 \begin{equation*}
 \mathbb{P}(\cdot) = 1-\mathbb{E}\left[\mathbb{E}\left\lbrace\left(\prod\limits_{r=R+1}^{\infty}\prod\limits_{j=1}^{C_{r}}e^{-\pi_{rj}}\right)^{N} \Bigg |C_{r}\right\rbrace\right],
 \end{equation*} 
 where $\pi_{rj}=G_{rj}^{(r)}V_{rj}^{(r)}\prod_{l=1}^{r}(1-V_{rj}^{(l)})$. We may then use Jensen's inequality to bound it as follows:
 \begin{align*}
 \mathbb{P}(\cdot) &\leq 1-\exp\left[N\sum\limits_{r=R+1}^{\infty}\mathbb{E}\left\lbrace\sum\limits_{j=1}^{C_{r}}\log(e^{-\pi_{rj}})\right\rbrace\right] \\
 &= 1-\exp\left[N\gamma\frac{1}{c}\sum\limits_{r=R+1}^{\infty}\left(\frac{\alpha}{1+\alpha}\right)^{r}\right] \\
 &= 1-\exp\left\lbrace-N\gamma\frac{\alpha}{c}\left(\frac{\alpha}{1+\alpha}\right)^{R}\right\rbrace.
 \end{align*}
\end{proof}

\section{Variational Inference}
As discussed in Section~\ref{sec:trunc}, we will focus on the infinite Gamma-Poisson model, where a gamma process prior is used in conjunction with a Poisson likelihood function.  When integrating out the weights of the gamma process, this process is known to yield a nonparametric prior for sparse, infinite count matrices~\cite{Gampois}.  We note that our approach should easily be applicable to other models involving gamma process priors.

 \subsection{The Model}
 To effectively perform variational inference, we re-write $G$ as a single sum of weighted atoms, using indicator variables $\{d_{k}\}$ for the rounds in which the atoms occur, similar to \cite{beta_st}:
 \begin{equation}
 \label{gp_eqn}
G = \sum\limits_{k=1}^{\infty}E_{k}e^{-T_{k}}\delta_{\omega_{k}},
\end{equation}
where $E_{k}\overset{iid}{\sim}\text{Exp}(c),\quad T_{k}\overset{ind}{\sim}\text{Gamma}(d_{k},\alpha), \quad \sum\limits_{k=1}^{\infty}\mathbbm{1}_{(d_{k}=r)} \overset{iid}{\sim}\text{Poisson}(\gamma), \quad \omega_{k}\overset{iid}{\sim}\frac{1}{\gamma}H_{0}$. We also place gamma priors on $\alpha, \gamma\text{ and }c: \alpha\sim\text{Gamma}(a_{1},a_{2}), \gamma\sim\text{Gamma}(b_{1},b_{2}),c\sim\text{Gamma}(c_{1},c_{2})$.
%where $E_{k}\overset{iid}{\sim}\text{Exp}(c),\quad T_{k}\overset{ind}{\sim}\Gamma(d_{k},\alpha), \quad \sum\limits_{k=1}^{\infty}\mathbbm{1}_{(d_{k}=r)} \overset{iid}{\sim}\text{Poisson}(\gamma), \quad \omega_{k}\overset{iid}{\sim}\frac{1}{\gamma}H_{0}$. 
%We also place gamma priors on $\alpha, \gamma\text{ and }c: \alpha\sim\Gamma(a_{1},a_{2}), \gamma\sim\Gamma(b_{1},b_{2}),c\sim\Gamma(c_{1},c_{2})$.
Denoting the data, the latent prior variables and the model hyperparameters by $\mathcal{D}, \Pi\text{ and }\Lambda$ respectively, the full likelihood may be written as $P(\mathcal{D}, \Pi|\Lambda)=P(\mathcal{D}, \Pi_{-G}|\Pi_{G},\Lambda)\cdot P(\Pi_{G}|\Lambda)\text{ where }P(\Pi_{G}|\Lambda)=P(\alpha)\cdot P(\gamma)\cdot P(c)\cdot P(\mathbf{d}|\gamma)\cdot \prod\limits_{k=1}^{K}P(E_{k}|c)\cdot P(T_{k}|d_{k},\alpha)\cdot \prod\limits_{n=1}^{N}P(z_{nk}|E_{k},T_{k})$.  We truncate the infinite gamma process to $K$ atoms, and take $N$ to be the total number of datapoints. $\Pi_{-G}$ denotes the set of the latent variables excluding those from the Poisson-Gamma prior; for instance, in factor analysis for topic models, this contains the Dirichlet-distributed factor variables (or topics).

From the Poisson likelihood, we have $z_{nk}|E_{k},T_{k}\sim\text{Poisson}(E_{k}e^{-T_{k}})$, independently for each $n$. The distributions of $T_{k}\text{ and }\mathbf{d}$ involve the indicator functions on the round indicator variables $d_{k}$:
%, and therefore are somewhat complicated:-
\begin{equation*}
P(T_{k}|d_{k},\alpha)=\frac{\alpha^{\nu_{k}(0)}}{\prod\limits_{r\geq 1}\Gamma(r)^{\mathbbm{1}_{(d_{k}=r)}}}T_{k}^{\nu_{k}(1)}e^{-\alpha T_{k}},
\end{equation*}  where $\nu_{k}(s) = \sum\limits_{r\geq 1}(r-s)\mathbbm{1}_{(d_{k}=r)}$, and
\begin{equation*}
P(\mathbf{d}|\gamma)=\prod\limits_{r=1}^{\infty}\frac{\gamma^{\sum_{k}\mathbbm{1}_{(d_{k}=r)}}}{\left(\sum_{k}\mathbbm{1}_{(d_{k}=r)}\right)!}\cdot \exp\left\lbrace-\gamma\mathbb{I}\left(\sum\limits_{r^{'}=r}^{\infty}\sum\limits_{k=1}^{\infty}\mathbbm{1}_{(d_{k}=r^{'})} > 0\right)\right\rbrace.
\end{equation*}See \cite{beta_st_vi} for a discussions on how to approximate these factors in the variational algorithm.

\subsection{The Variational Prior Distribution}
Mean-field variational inference involves minimizing the KL divergence between the model posterior, and a suitably constructed \textit{variational} distribution which is used as a more tractable alternative to the actual posterior distribution. To that end, we propose a fully-factorized variational distribution on the Poisson-Gamma prior as follows:
\begin{equation*}
Q=q(\alpha)\cdot q(\gamma)\cdot q(c)\cdot\prod\limits_{k=1}^{K}q(E_{k})\cdot q(T_{k})\cdot q(d_{k})\cdot\prod\limits_{n=1}^{N}q(z_{nk}),
\end{equation*} 
where $q(E_{k})\sim \text{Gamma}(\acute{\xi_{k}},\acute{\epsilon_{k}}),\quad q(T_{k})\sim \text{Gamma}(\acute{u_{k}},\acute{\upsilon_{k}}),\quad q(\alpha)\sim \text{Gamma}(\kappa_{1},\kappa_{2}),\quad q(\gamma)\sim \text{Gamma}(\tau_{1},\tau_{2}),\quad q(c)\sim \text{Gamma}(\rho_{1},\rho_{2}),\quad q(z_{nk})\sim\text{Poisson}(\lambda_{nk}),\quad q(d_{k})\sim\text{Mult}(\varphi_{k})$.
%where $q(E_{k})\sim \Gamma(\acute{\xi_{k}},\acute{\epsilon_{k}}),\quad q(T_{k})\sim \Gamma(\acute{u_{k}},\acute{\upsilon_{k}}),\quad q(\alpha)\sim \Gamma(\kappa_{1},\kappa_{2}),\quad q(\gamma)\sim \Gamma(\tau_{1},\tau_{2}),\quad q(c)\sim \Gamma(\rho_{1},\rho_{2}),\quad q(z_{nk})\sim\text{Poisson}(\lambda_{nk}),\quad q(d_{k})\sim\text{Mult}(\varphi_{k})$.

Instead of working with the actual KL divergence between the full posterior and the factorized proxy distribution, variational inference maximizes what is canonically known as the \textit{evidence lower bound} (ELBO), a function that is the same as the KL divergence up to a constant. In our case it may be written as $\mathcal{L}=\mathbb{E}_{Q}\log P(\mathcal{D}, \Pi|\Lambda) - \mathbb{E}_{Q}\log Q$. We omit the full representation here for brevity.

\subsection{The Variational Parameter Updates}
Since we are using exponential family variational distributions, we leverage the closed form variational updates for exponential families wherever we can, and perform gradient ascent on the ELBO for the parameters of those distributions which do not have closed form updates. We list the updates on the distributions of the prior below. 
%The data likelihood distributions depend on the data, so the updates for those will be discussed in the corresponding experimental sections.
The closed-form updates for the hyperparameters in $q(E_{k}), q(\alpha), q(c)\text{ and }q(\gamma)$ are as follows:
\begin{eqnarray*}
\acute{\xi_{k}}=\sum_{n=1}^{N}\mathbb{E}_{Q}(z_{nk})+1,\quad\acute{\epsilon_{k}}=\mathbb{E}(c)+N\times\mathbb{E}_{Q}\left[e^{-T_{k}}\right], \quad \kappa_{1}=\sum_{k=1}^{K}\sum\limits_{r\geq 1}r\varphi_{k}(r)+a_{1}, \\ 
\kappa_{2}=\sum_{k=1}^{K}\mathbb{E}_{Q}(T_{k})+a_{2}, \quad \rho_{1}=c_{1}+K,\quad\rho_{2}=\sum_{k=1}^{K}\mathbb{E}_{Q}(E_{k})+c_{2},\\
\tau_{1}=b_{1}+K,\quad\tau_{2}=\sum\limits_{r\geq 1}\left\lbrace1-\prod\limits_{k=1}^{K}\sum\limits_{\acute{r}=1}^{r-1}\varphi_{k}(\acute{r})\right\rbrace+b_{2}.
\end{eqnarray*}
The updates for the multinomial probabilities in $q(d_{k})$ are given by:
\begin{equation*}
\begin{split}
\varphi_{k}(r)\propto \exp\{ r\mathbb{E}_{Q}(\log\alpha)-\log\Gamma(r)+(r-1)\mathbb{E}_{Q}(\log T_{k})-\zeta\cdot\sum\limits_{i\neq k}\varphi_{i}(r)\\-\mathbb{E}_{Q}(\gamma)\sum\limits_{j=2}^{r}\prod\limits_{k^{'}\neq k}\sum\limits_{r^{'}=1}^{j-1}\varphi_{k^{'}}(r^{'})\}.
\end{split}
\end{equation*}
In addition to these updates, our variational algorithm requires gradient ascent updates on $q(T_{k})$ and updates on $q(\Pi_{-G})\text{ and }q(z_{nk})$  as follows:

The gradients for the two variational parameters in $q(T_{k})$ are:
\begin{equation*}
\begin{split}
\frac{\partial\mathcal{L}}{\partial\acute{u_{k}}}=\sum\limits_{r\geq 1}(r-1)\varphi_{k}(r)\psi^{'}(\acute{u_{k}})-\frac{\mathbb{E}_{Q}(\alpha)}{\acute{\upsilon_{k}}}-\sum\limits_{n=1}^{N}\mathbb{E}_{Q}(E_{k})\left(\frac{\acute{\upsilon_{k}}}{\acute{\upsilon_{k}}+1}\right)^{\acute{u_{k}}}\cdot\log\left(\frac{\acute{\upsilon_{k}}}{\acute{\upsilon_{k}}+1}\right)\\-\sum\limits_{n=1}^{N}\mathbb{E}_{Q}(z_{nk})\frac{1}{\acute{\upsilon_{k}}}-(\acute{u_{k}}-1)\psi^{'}(\acute{u_{k}}) - 1
\end{split}
\end{equation*}
\begin{equation*}
\begin{split}
\frac{\partial\mathcal{L}}{\partial\acute{\upsilon_{k}}}=-\sum\limits_{r\geq 1}(r-1)\varphi_{k}(r)\frac{1}{\acute{\upsilon_{k}}}+\mathbb{E}_{Q}(\alpha)\frac{\acute{u_{k}}}{\left(\acute{\upsilon_{k}}\right)^{2}}-\sum\limits_{n=1}^{N}\mathbb{E}_{Q}(E_{k})\acute{u_{k}}\frac{\acute{\upsilon_{k}}^{\acute{u_{k}}-1}}{(\acute{\upsilon_{k}}+1)^{\acute{u_{k}}+1}}\\+\sum\limits_{n=1}^{N}\mathbb{E}_{Q}(z_{nk})\frac{\acute{u_{k}}}{\left(\acute{\upsilon_{k}}\right)^{2}}- \frac{1}{\acute{\upsilon_{k}}}.
\end{split}
\end{equation*}

For the topic modeling problems, we model the observed vocabulary-vs-document corpus count matrix $D$ as $D\sim \text{Poi}(\Phi Z)$, where the $V\times K$ matrix $\Phi$ models the factor loadings, and the $K\times N\text{ matrix }Z$ models the actual factor counts in the documents. We put the $K-$truncated Poisson-Gamma prior on $Z$, and put a Dirichlet$(\beta_{1},\ldots,\beta_{V})$ prior on the columns of $\Phi$.

The variational distribution $Q$ consequently gets a Dirichlet$(\Phi|\{\textbf{b}\}_{k})$ distribution multiplied to it, where $\textbf{b}=(b_{1},\ldots,b_{V})$ are the variational Dirichlet hyperparameters. This setup does not immediately lend itself to closed form updates for the $b$-s, so we resort to gradient ascent. The gradient of the ELBO with respect to each variational hyperparameter is
\begin{equation*}
\begin{split}
\frac{\partial\mathcal{L}}{\partial b_{vk}} = - \mathbb{E}_{Q}(z_{nk})\cdot\frac{\sum_{v}b_{vk}- b_{vk}}{\left(\sum_{v}b_{vk}\right)^{2}} + \psi^{'}(b_{vk})\cdot\left(\beta_{v}-b_{vk}+\sum_{n}d_{vn}\right)+\psi^{'}(\sum_{v}b_{vk})\times\\\left(\sum_{v}b_{vk}-V-\beta_{v}-\sum_{n}d_{vn}+1\right).
\end{split}
\end{equation*}
In practice however we found a closed-form update facilitated by a simple lower bound on the ELBO to converge faster. We describe the update here. First note that the part of the ELBO relevant to a potential closed form variational update of $\phi_{vk}$ can be written as
\begin{displaymath}
\mathcal{L}= -\phi_{vk}\cdot\sum_{n}\mathbb{E}_{Q}(z_{nk}) + \sum_{n}d_{vn}\cdot\log\phi_{vk} + \cdots,
\end{displaymath} 
which can then be lower bounded as 
\begin{displaymath}
\mathcal{L} \geq \log\phi_{vk}\cdot\left(-\sum_{n}\mathbb{E}_{Q}(z_{nk}) + \sum_{n}d_{vn}\right)+\cdots.
\end{displaymath} 
This allows us to analytically update $b_{vk}$ as $b_{vk}=-\sum_{n}\mathbb{E}_{Q}(z_{nk}) + \sum_{n}d_{vn} + \beta_{v}$. This frees us from having to choose appropriate corpus-specific initializations and learning rates for the $\Phi$s.

A similar lower bound on the ELBO allows us to update the variational parameters of $q(z_{nk})$ as $\lambda_{nk} = -1 - \sum_{v}d_{vn} + \mathbb{E}_{Q}(\log E_{k}) + \mathbb{E}_{Q}(T_{k})$.

\section{The MCMC Sampler}
As a baseline, we also derive and compare with a standard MCMC sampler for this model.  We use the construction in \eqref{gp_eqn} for sampling from the model. To avoid inferring the latent variables in all the atom weights of the Poisson-Gamma prior, we use Monte Carlo techniques to integrate them out, as in \cite{beta_st}. This affects posterior inference for the indicators $z_{nk}$, the round indicators $\mathbf{d}$ and the hyperparameters $c\text{ and }\alpha$. The posterior distribution for $\gamma$ is closed form, as are those for the likelihood latent variables in $\Phi_{-G}$. 
We re-write the construction of the Poisson-Gamma prior:
 \begin{equation*}
G = \sum\limits_{k=1}^{\infty}E_{k}e^{-T_{k}}\delta_{\omega_{k}},
\end{equation*}
$E_{k}\overset{iid}{\sim}\text{Exp}(c),\quad T_{k}\overset{ind}{\sim}\text{Gamma}(d_{k},\alpha), \quad \sum\limits_{k=1}^{\infty}\mathbbm{1}_{(d_{k}=r)} \overset{iid}{\sim}\text{Pois}(\gamma), \quad \omega_{k}\overset{iid}{\sim}\frac{1}{\gamma}H_{0}$. We put improper priors on $\alpha$ and $c$, and a noninformative Gamma prior on $\gamma$. The indicator counts are given by $Z_{nk}\sim\text{Pois}(g_{k}),\text{ where } g_{k}=E_{k}e^{-T_{k}}$. To avoid sampling the atom weights $E_{k}\text{ and }T_{k}$, we integrate them out using Monte Carlo techniques in the sampling steps for the prior.

\subsection{Sampling the round indicators}
The conditional posterior for the round indicators $\mathbf{d}=\left\lbrace d_{k}\right\rbrace_{k=1}^{K}$ can be written as 
\begin{equation*}
%\mathlarger{
p\left(d_{k}=i|\lbrace d_{l}\rbrace_{l=1}^{k-1},\lbrace Z_{nk}\rbrace_{n=1}^{N},\alpha,c,\gamma\right)\propto p\left(\lbrace Z_{nk}\rbrace_{n=1}^{N}|d_{k}=i,\alpha,c\right)p\left(d_{k}=i|\lbrace d_{l}\rbrace_{l=1}^{k-1}\right).%}
\end{equation*}

For the first factor, we collapse out the stick-breaking weights and approximate the resulting integral using Monte-Carlo techniques as follows:
\begin{align*}
p\left(\lbrace Z_{nk}\rbrace_{n=1}^{N}|d_{k}=i,\alpha,c\right) &= \int_{[0,\infty]^{i}}\prod\limits_{n=1}^{N}\text{Pois}(Z_{nk}|g_{k})\mathrm{d}G\\
&\approx \frac{1}{S}\sum\limits_{s=1}^{S}\prod\limits_{n=1}^{N}\text{Pois}(Z_{nk}|g_{k}^{(s)}),
\end{align*}
where $g_{k}^{(s)} = E_{k}^{(s)}e^{-T_{k}^{(s)}}\overset{d}{=}V_{k,d_{k}}^{(s)}\prod_{l=1}^{d_{k}}(1-V_{kl}^{(s)})$. Here $S$ is the number of simulated samples from the integral over the stick-breaking weights. We take $S=1000$ in our experiments.

The second factor is the same as \cite{beta_st}:
\begin{equation*}
%\mathlarger{
p(d_{k}=d|\gamma,\lbrace d_{l}\rbrace_{l=1}^{k-1}) = 
\left\{
   \begin{array}{lll}
      0 & \mbox{if } d < d_{k-1} \\
      \frac{1-\sum_{t=1}^{D_{k-1}}\text{Pois}(t|\gamma)}{1-\sum_{t=1}^{D_{k-1}-1}\text{Pois}(t|\gamma)} & \mbox{if } d = d_{k-1} \\
      \left(1-\frac{1-\sum_{t=1}^{D_{k-1}}\text{Pois}(t|\gamma)}{1-\sum_{t=1}^{D_{k-1}-1}\text{Pois}(t|\gamma)}\right)(1-\text{Pois}(0|\gamma))\text{Pois}(0|\gamma)^{h-1} & \mbox{if } d = d_{k-1}+h.
   \end{array}
\right.%}
\end{equation*}
Here $D_{k}\overset{\Delta}{=}\sum\limits_{j=1}^{k}\mathbb{I}(d_{j}=d_{k})$. Normalizing the product of these two factors over all $i$ is infeasible, so we evaluate this product for increasing $i$ till it drops below $10^{-2}$, and normalize over the gathered values.

\subsection{Sampling the factor variables}
Here we consider the Poisson factor modeling scenario that we use to model vocabulary-document count matrices. Recall that a $V\times N$ count matrix $D$ is modeled as $D=\text{Poi}(\Phi Z)$, where the $V\times K$ matrix $\Phi$ models the factor loadings, and the $K\times N\text{ matrix }Z$ models the actual factor counts in the documents.. We put the Poisson-Gamma prior on $Z$ and symmetric Dirichlet$(\beta_{1},\ldots,\beta_{V})$ priors on the columns of $\Phi$. The sampling steps for $\Phi$ and $Z$ are described next.
\subsubsection{Sampling $\Phi$}
First note that the elements of the count matrix are modeled as $d_{vn}=\text{Poi}\left(\sum_{k=1}^{K}\phi_{vk}z_{kn}\right)$, which can be equivalently written as $d_{vn}=\sum_{k=1}^{K}d_{vkn},\quad d_{vkn}=\text{Poi}(\phi_{vk}z_{kn})$. Standard manipulations then allow us to sample the $d_{vkn}$'s from $\text{Mult}(d_{vn};p_{v1n},\ldots,p_{vKn})$ where $p_{vkn}=\phi_{vk}z_{kn}/\sum_{k}^{K}\phi_{vk}z_{kn}$.

Now we have $\phi_{k}\sim\text{Dirichlet}(\beta_{1},\ldots,\beta_{V})$. Using standard relationships between Poisson and multinomial distributions, we can derive the posterior distribution of the $\phi_{k}$'s as $\text{Dirichlet}(\beta_{1}+d_{1k},\ldots,\beta_{V}+d_{Vk}),\text{ where }d_{vk}=\sum_{n=1}^{N}d_{vkn}$.
\subsubsection{Sampling Z}
In our algorithm we sample each $z_{nk}$ conditioned on all the other variables in the model; therefore the conditional posterior distribution can be written as 
\begin{align*}
p(z_{nk}|D, \Phi,Z_{n,-k},\mathbf{d},\alpha,c,\gamma) &= p(D|Z_{n},\Phi)p(z_{nk}|\mathbf{d},\alpha,c,Z_{n,-k}) \\
&= \prod\limits_{v=1}^{V}\text{Poi}\left(d_{vn}|\sum\limits_{k=1}^{K}\phi_{vk}z_{kn}\right)\frac{p(Z_{n}|\mathbf{d},\alpha,c)}{p(Z_{n,-k}|\mathbf{d},\alpha,c)}.
\end{align*}
The distributions in both the numerator and denominator of the second factor can be sampled from using the Monte Carlo techniques described above, by integrating out the stick-breaking weights.

%\subsubsection{Linear-Gaussian}
%\begin{align*}
%p(Z_{nk}|X, \Phi,Z_{n,-k}\mathbf{d},\alpha,c,\eta,\zeta) &= p(X|Z_{n},\Phi,\eta,\zeta)p(Z_{nk}|\mathbf{d},\alpha,c,Z_{n,-k}) \\
%&= \left[\int_{W}p(X|Z_{n},W,\Phi,\eta,\zeta)p(W|\zeta)\mathrm{d}W\right]\frac{p(Z_{n}|\mathbf{d},\alpha,c)}{p(Z_{n,-k}|\mathbf{d},\alpha,c)} \\
%&= N(X|\mathbf{0},\eta I_{V}+\zeta Y_{(n)}Y_{(n)}^{T})\cdot\frac{p(Z_{n}|\mathbf{d},\alpha,c)}{p(Z_{n,-k}|\mathbf{d},\alpha,c)}
%\end{align*}
%where $Y_{(n)}$ is a $K\times V$ matrix whose $p^{th}$ column is given by $Y_{(n)}(\cdot,p)=Z_{n}\circ\Phi_{\cdot,p}$

\subsection{Sampling hyperparameters}
As mentioned above, we put a noninformative Gamma prior on $\gamma$ and improper (1) priors on $\alpha$ and $c$. The posterior sampling steps are described below:
\subsubsection{Sampling $\gamma$}
Given the round indicators $\mathbf{d}=\left\lbrace d_{k}\right\rbrace$, we can recover the round-specific atom counts as described above. Then the conjugacy between the Gamma prior on $\gamma$ and the Poisson distribution of $C_{i}$ gives us a closed form posterior distribution for $\gamma$:
$p(\gamma|\mathbf{d},Z,\alpha,c) = \text{Gamma}(\gamma|a+\sum_{i=1}^{K}C_{i},b+d_{K})$.
\subsubsection{Sampling $\alpha$}
The conditional posterior distribution of $\alpha$ may be written as:
\begin{displaymath}
p(\alpha|Z,\mathbf{d},c)\propto p(\alpha)\prod_{n=1}^{N}\prod_{k=1}^{K}p(Z|\mathbf{d},\alpha,c). 
\end{displaymath}
We calculate the posterior distribution of $Z$ using Monte Carlo techniques as described above. Then we discretize the search space for $\alpha$ around its current values as $\left(\alpha_{cur}+t\Delta\alpha\right)_{t=L}^{U}$, where the lower and upper bounds $L$ and $U$ are chosen so that the unnormalized posterior falls below $10^{-2}$. The search space is also clipped below at 0. $\alpha$ is then drawn from a multinomial distribution on the search values after normalization.
\subsubsection{Sampling c}
We sample $c$ in exactly the same way as $\alpha$. We first write the conditional posterior as 
\begin{displaymath}
p(c|Z,\mathbf{d},\alpha)\propto p(c)\prod_{n=1}^{N}\prod_{k=1}^{K}p(Z|\mathbf{d},\alpha,c). 
\end{displaymath}
The search space $(c>0)$ is then discretized using appropriate upper and lower bounds as above, and $Z$ is sampled using Monte Carlo techniques. $c$ is then drawn from a multinomial distribution on the search values after normalization.

\section{Experiments}
%\subsection{Synthetic data}
%\subsection{Non-negative Matrix Factorization}
We consider the problem of learning latent topics in document corpora. Given an observed set of counts of vocabulary words in a set of documents, represented by say a $V\times N$ count matrix, where $V$ is the vocabulary size and $N$ the number of documents, we aim to learn $K$ latent factors and their vocabulary realizations using Poisson factor analysis. In particular, we model the observed corpus count matrix $D$ as $D\sim \text{Poi}(\Phi\mathbf{I})$, where the $V\times K$ matrix $\Phi$ models the factor loadings, and the $K\times N\text{ matrix }\mathbf{I}$ models the actual factor counts in the documents. As a baseline, we also derive and compare with a standard MCMC sampler for this model.  We use the construction in \eqref{gp_eqn} for sampling from the model. To avoid inferring the latent variables in all the atom weights of the Poisson-Gamma prior, we use Monte Carlo techniques to integrate them out, as in \cite{beta_st}. This affects posterior inference for the indicators $z_{nk}$, the round indicators $\mathbf{d}$ and the hyperparameters $c\text{ and }\alpha$. The posterior distribution for $\gamma$ is closed form, as are those for the likelihood latent variables in $\Pi_{-G}$. The complete updates are described in the supplementary.

We implemented and analyzed the performance of three variational algorithms corresponding to three different priors on $\mathbf{I}$: the Poisson-gamma process prior from this paper (abbreviated hereafter as VGP), the Bernoulli-beta prior from \cite{beta_st_vi} (VBP) and the IBP prior from \cite{ibp_vi_fdv} (VIBP), along with the MCMC sampler mentioned above (SGP). For the Bernoulli-beta priors we modeled $\mathbf{I}$ as $\mathbf{I}=W\circ Z$ as in \cite{beta_st_vi}, where the nonparametric priors are put on $Z$ and a vague Gamma prior is put on $W$. For the VGP and SGP models we set $\mathbf{I}=Z$. In addition, for all four algorithms, we put a symmetric Dirichlet$(\beta_{1},\ldots,\beta_{V})$ prior on the columns of $\Phi$. We added corresponding variational distributions for the variables in the collection denoted as $\Pi_{-G}$ above. 
%The closed-form updates for $q(Z)\text{ and }q(\Pi_{-G})$ require some simple approximations and are detailed in the supplementary. 
We use held-out per-word test log-likelihoods and times required to update all variables in $\Pi$ in each iteration as our comparison metrics, with 80\% of the data used for training. We used the same likelihood metric as \cite{zhou_1}, with the samples replaced by the expectations of the variational distributions.

\textbf{Synthetic Data.} As a warm-up, we consider the performances of VGP and SGP on some synthetic data generated from this model. We generate 200 weighted atoms from the gamma prior using the stick-breaking construction, and use the Poisson likelihood to generate 3000 values for each atom to yield the indicator matrix $Z$. We simulated a vocabulary of 200 terms, generated a 200$\times$200 factor-loading matrix $\Phi$ using symmetric Dirichlet priors, and then generated $D=\text{Poi}(\Phi Z)$. 
%For the SGP we used 100 burn-in iterations, and measured the test perplexity once every 10 iterations after that. 
For the VGP, we measure the test likelihood after every iteration and average the results across 10 random restarts. These measurements are plotted in fig.\ref{figr:synt}. As shown, VGP's measured heldout likelihood converges within 10 iterations. The SGP traceplot shows the first thirty heldout likelihoods measured after burn-in. Per-iteration times were 15 seconds and 2.36 minutes for VGP (with $K$=125) and SGP respectively. The SGP learned $K$ online, with values oscillating around 50. SNBP refers to the Poisson-Gamma mixture (``NB process") sampler from \cite{zhou_1}. Its traceplot shows the first 30 likelihoods measured after 1000 burn-in iterations. We see that it performed similarly to our algorithms, though slightly worse.

\textbf{Real data.} We used a similar framework to model the count data from the Psychological Review (PsyRev)\footnote{http://psiexp.ss.uci.edu/research/programs\_data/toolbox.htm}, NIPS\footnote{http://www.stats.ox.ac.uk/~teh/data.html}, KOS\footnote{\label{uci}https://archive.ics.uci.edu/ml/datasets/Bag+of+Words} and New York Times\footnotemark[\value{footnote}] corpora. The vocabulary sizes are 2566, 13649, 6906 and 100872 respectively, while the document counts are 1281, 1740, 3430 and 300000 respectively. For each dataset, we ran all three variational algorithms with 10 random restarts each, measuring the held-out log-likelihoods and per-iteration runtimes for different values of the truncation factor $K$. The learning rates for gradient ascent updates were kept on the order of $10^{-4}$ for both VGP and VBP, with 5 gradient steps per iteration. 
%The sampler was run for 500 burn-in iterations and 1000 iterations thereafter, with test log likelihood being measured at every $10^{th}$ iteration after burn-in. 
A representative subset of results is shown in figs.\ref{figr:psypp} through \ref{figr:nytt}.

We used vague gamma priors on the hyperparameters $\alpha, \gamma\text{ and }c$ in the variational algorithms, and improper (1) priors for the sampler. We found the test likelihoods to be independent of these initializations. The results for the variational algorithms were dependent on the Dirichlet prior $\beta$ on $\Phi$, as noted in fig.\ref{figr:psypp}. We therefore used the learned test likelihood after 100 iterations as a heuristic to select $\beta$. We found the three variational algorithms to attain very similar test likelihoods across all four datasets after a few hours of CPU time, with the VGP and VBP having a slight edge over the VIBP. The sampler somewhat unexpectedly did not attain a competitive score for any dataset, unlike the synthetic case. For instance, as shown in fig.\ref{figr:psypitn}, it oscillated around -7.45 for the PsyRev dataset, whereas the variational algorithms attained -7.23. For comparison, the NB process sampler from \cite{zhou_1} attains -7.25 each iteration after 1000 iterations of burn-in. Also as seen in fig.\ref{figr:psypitn}, VGP was faster to convergence (in less than 10 iterations in $\sim$5 seconds) than VIBP and VBP ($\sim$50 iterations each). The test log-likelihoods after a few hours of runtime were largely independent of the truncation $K$ for the three variational algorithms. Behavior for the other datasets was similar.

Among the three variational algorithms, the VIBP scaled best for small to medium datasets as a function of the truncation factor due to all updates being closed-form, in spite of having to learn the additional weight matrix $W$. The VGP running times were competitive for small values of $K$ for these datasets. However, in the large NYT dataset, VGP was orders of magnitude faster than the Bernoulli-beta algorithms (note the log-scale in fig.\ref{figr:nytt}). For example, with a truncation of 100 atoms, VGP took around 45 seconds per iteration, whereas both VIBP and VBP took more than 3 minutes. The VBP scaled poorly for all datasets, as seen in figs.\ref{figr:psyt} through \ref{figr:nytt}. The reason for this is three-fold: learning the parameters for the additional matrix $W$ which is directly affected by dimensionality (also the reason for VIBP being slow for NYT dataset), gradient updates for two variables (as opposed to one for VGP) and a Taylor approximation required for these gradient updates (see \cite{beta_st_vi}). The sampler SGP required around 7 minutes per iteration for the small datasets and an hour and 40 minutes on average for NYT.

To summarize, we found the VGP to post running times that are competitive with the fastest algorithm (VIBP) in small to medium datasets, and outperform the other methods completely in the large NYT dataset, all the while providing similar accuracy compared to variational algorithms for similar models, as measured by held-out likelihood. It was also the fastest to converge, typically taking less than 15 iterations.  Compared with SGP, our variational method is substantially faster (particularly on large-scale data) and produces higher likelihood scores on real data.
%When this was used as a stopping criterion, it posted lowest total running times for small values of $K$ even for the smaller datasets, where its per-iteration runtimes were slightly longer than VIBP. We expect that these improvements can be further refined with more sophisticated variational techniques recently proposed in the literature.

\begin{figure}
\centering
\begin{subfigure}[b]{0.3\textwidth}
\includegraphics[width=\textwidth]{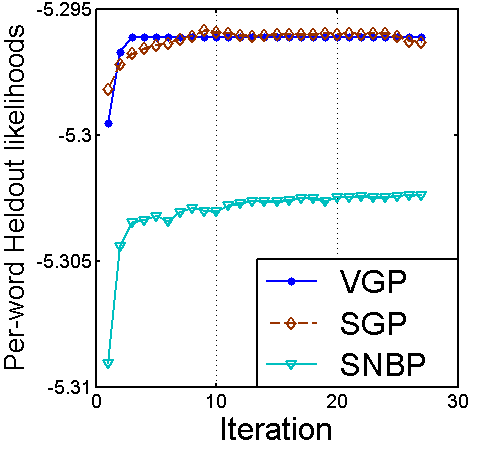}
\caption{ }
\label{figr:synt}
\end{subfigure}
\begin{subfigure}[b]{0.3\textwidth}
\includegraphics[width=\textwidth]{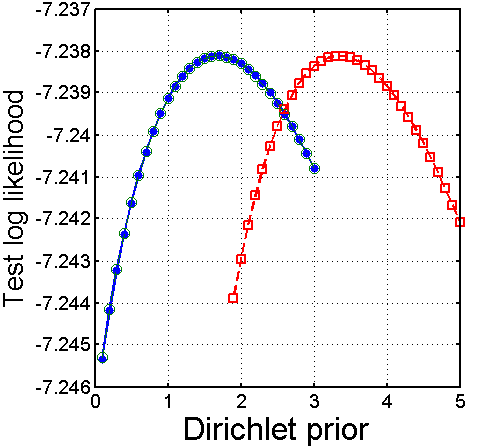}
\caption{ }
\label{figr:psypp}
\end{subfigure}
\begin{subfigure}[b]{0.3\textwidth}
\includegraphics[width=\textwidth]{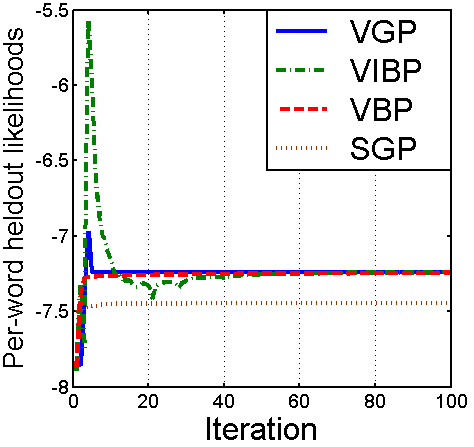}
\caption{ }
\label{figr:psypitn}
\end{subfigure}
\begin{subfigure}[b]{0.3\textwidth}
\includegraphics[width=\textwidth]{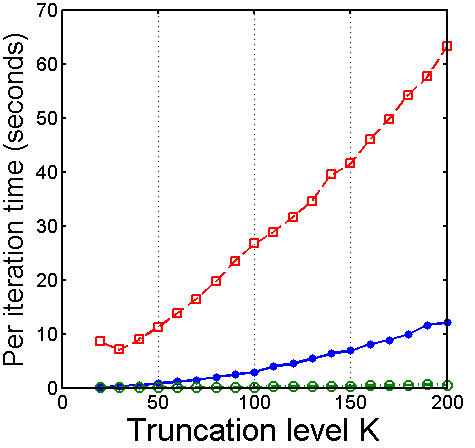}
\caption{ }
\label{figr:psyt}
\end{subfigure}
\begin{subfigure}[b]{0.3\textwidth}
\includegraphics[width=\textwidth]{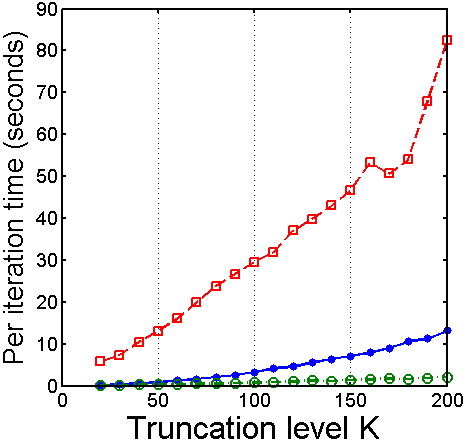}
\caption{ }
\label{figr:nipst}
\end{subfigure}
\begin{subfigure}[b]{0.3\textwidth}
\includegraphics[width=\textwidth]{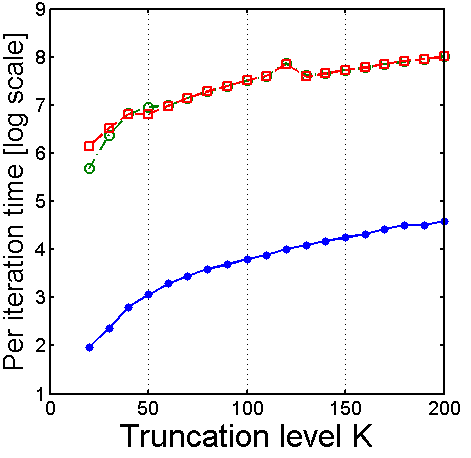}
\caption{ }
\label{figr:nytt}
\end{subfigure}
\caption{Plots of held-out test likelihoods and per-iteration running times. Plots (d), (e) and (f) are for PsyRev, KOS, and NYT respectively. Plots (b) and (c) are for the PsyRev dataset. Algorithm trace colors are common to all plots. See text for full details.}
\label{resfig}
\end{figure}
\section{Conclusion}
%\textbf{Conclusion.} 
We have described a novel stick-breaking representation for gamma processes and used it to derive a variational inference algorithm. This algorithm has been shown to be far more scalable for large datasets than variational algorithms for related models while attaining similar accuracy, and also outperforms sampling-based methods. We expect that recent improvements to variational techniques can also be applied to our algorithm, potentially yielding even further scalability. 
%Our stick-breaking construction can be extended to hierarchical models, giving direct count modeling analogs to Dirichlet process based techniques like HDP. The derivation of such constructions and resultant algorithms and evaluation of their performance relative to current hierarchical models in novel application areas is an exciting area of future work.  
\bibliographystyle{unsrt}
\bibliography{refpaper}

\end{document}